\documentclass[11pt]{article}

\usepackage[margin=1in]{geometry}

\usepackage{amsmath,amssymb,amsthm}
\usepackage{graphicx}
\usepackage{booktabs}
\usepackage{microtype}
\usepackage{indentfirst}
\usepackage{csquotes}
\usepackage{xcolor}
\usepackage{paralist}
\usepackage{hyperref}
\usepackage[numbers]{natbib}
\usepackage{thmtools}
\usepackage{thm-restate}
\usepackage{longtable}
\usepackage{subcaption}

\hypersetup{
  colorlinks=true,
  linkcolor=blue,
  citecolor=blue,
  urlcolor=blue,
  filecolor=blue
}

\newtheorem{theorem}{Theorem}

\newtheorem{corollary}[theorem]{Corollary}

\newtheorem{assumption}[theorem]{Assumption}

\newcommand{\calU}{\mathcal{U}}
\newcommand{\calC}{\mathcal{C}}

\title{Retrieve, Then Classify: Corpus-Grounded Automation of Clinical Value Set Authoring}

\author{Sumit Mukherjee, Juan Shu, Nairwita Mazumder, \\
Tate Kernell, Celena Wheeler, Shannon Hastings, Chris Gibbons \\
\textit{Oracle Health Data Intelligence}}
\date{\today}

\begin{document}

\maketitle


\begin{abstract}
Clinical value set authoring --- the task of identifying all codes in a standardized vocabulary that define a clinical concept --- is a recurring bottleneck in clinical quality measurement and phenotyping. A natural approach is to prompt a large language model (LLM) to generate the
required codes directly, but structured clinical vocabularies are large, version-controlled, and not reliably memorized during pretraining. We propose Retrieval-Augmented Set Completion (RASC): retrieve the $K$ most similar existing value sets from a curated corpus to form a candidate
pool, then apply a classifier to each candidate code. Theoretically, retrieve-and-select can reduce statistical complexity by shrinking the effective output space from the full vocabulary to a much smaller retrieved candidate pool. We demonstrate the utility of RASC on 11,803 publicly available VSAC value sets, constructing the first large-scale benchmark for this task. A cross-encoder fine-tuned on SAPBert achieves AUROC~0.852 and value-set-level F1~0.298, outperforming a simpler three-layer Multilayer Perceptron (AUROC~0.799, F1~0.250) and both reduce the number of irrelevant candidates per true positive from 12.3 (retrieval-only) to approximately 3.2 and 4.4 respectively. Zero-shot GPT-4o achieves value-set-level F1~0.105, with 48.6\% of returned codes absent from VSAC entirely. This performance gap widens with increasing value set size, consistent with RASC's theoretical advantage. We observe similar performance gains across two other classifier model types, namely a cross-encoder initialized from pre-trained SAPBert and a LightGBM model, demonstrating that RASC's benefits extend beyond a single model class. The code to download and create the benchmark dataset, as well as the model training code is available at:  \href{https://github.com/mukhes3/RASC}{https://github.com/mukhes3/RASC}. 
\end{abstract}

\section{Introduction}
\label{sec:intro}

Clinical value set authoring is the task of identifying, from a controlled
medical vocabulary, all codes that define a given clinical concept ---
for example, every SNOMED-CT and ICD-10-CM code that constitutes
``Type~2 Diabetes Mellitus'' for use in a quality measure.
Value sets are a foundational infrastructure of clinical quality measurement
and phenotyping~\cite{vsac2013}, yet authoring them remains largely manual:
a clinical informaticist must search a vocabulary of $10^5$--$10^6$ codes,
draw on domain expertise to judge relevance, and produce a list that is both complete and precise. The Value Set Authority Center (VSAC) now hosts over $10,000$ such sets, with creation rates accelerating, making the scalability of manual authoring an
increasingly practical concern.

The same structure --- construct a target subset $S^* \subseteq \mathcal{U}$
from a large discrete universe $\mathcal{U}$ given a concept query --- appears in gene panel construction~\cite{panelapp} and systematic review inclusion~\cite{o2019using}, where the universe is a gene catalogue or a paper database respectively.
What these tasks share is that $\mathcal{U}$ is large, structured, and version-controlled; the target $S^*$ is small relative to $\mathcal{U}$; a corpus of related expert-curated sets already exists; and errors carry real
downstream consequences.

A natural approach is to prompt a large language model (LLM) to generate $S^*$ directly from the concept name.
This is appealing in its simplicity, but clinical code identifiers are not natural language --- they are structured, version-controlled strings that are not reliably memorized from pretraining data. We find empirically that GPT-4o (chosen due to it's widespread use) returns codes absent from VSAC entirely at a 48.6\% rate, consistent with prior work on LLM reliability in clinical NLP~\cite{sivarajkumar2024gpt4}. There is also a more domain specific limitation: prompting an LLM from scratch ignores the fact that most domains already possess a corpus of prior expert curation that encodes structured knowledge about $\mathcal{U}$. For instance, when a clinical informaticist authors a new value set for ``Type~2 Diabetes
Without Complications,'' the codes they need are largely a near-subset of codes already assembled in existing value sets for related diabetic concepts.

To overcome these, we propose \emph{retrieval-augmented set completion} (RASC), a two-stage framework that exploits this signal directly. In Stage~1, semantic similarity search retrieves the $K$ most similar existing sets from the corpus, forming a candidate pool $\mathcal{C} \subseteq \mathcal{U}$ that is small relative to $\mathcal{U}$ but enriched for members of $S^*$. In Stage~2, a lightweight binary classifier decides, for each item in $\mathcal{C}$, whether it belongs to $S^*$. This reduces an intractable generation problem over $|\mathcal{U}|$ items to a tractable classification problem over a pool of a few hundred candidates.


This paper makes three contributions.
First, we formalize the set completion problem and the RASC framework, and provide a learning-theoretic analysis showing that RASC's sample complexity scales with $\log K$ rather than $\log N$, and derive an explicit sample complexity gap. Second, we construct the first large-scale benchmark for clinical value set authoring and demonstrate that a cross-encoder substantially outperforms both the retrieval-only baseline and zero-shot LLM generation across all value set types and sizes. Finally, we isolate the contribution of retrieval grounding from learned classification via a pool-grounded LLM generator.

\section{Related Work}

\subsection{Retrieval-augmented generation.}
Retrieval-augmented generation (RAG)~\cite{lewis2020retrieval} conditions a
generative model on retrieved documents to ground outputs in evidence. RASC
differs in a crucial way: retrieval in RAG provides \emph{context} for
generation, while retrieval in RASC provides the \emph{feasible set} itself.
The classifier is not conditioned on retrieved text; it operates over retrieved
\emph{items}, and the retrieved pool hard-constrains the output space. This
distinction has non-trivial theoretical consequences: in RAG, the generative
model still samples from $\calU$; in RASC, the output is guaranteed to be
a subset of $\calC$.

\subsection{Set prediction and subset selection.}
Set prediction~\cite{zhang2020deep} and multi-label
classification~\cite{tsoumakas2007multi} are closely related problems. Our
formulation is distinct in that the label space is not fixed across examples:
each query $q$ induces a different candidate pool $\calC(q)$, and the
classifier must generalize across pools. This makes our setting closer to
extreme multi-label classification~\cite{bhatia2015sparse} over dynamic
label sets, with the additional structure that negatives are structured
(items drawn from proximal sets) rather than randomly sampled.

\subsection{Automated clinical coding and value sets.}
Automated ICD coding from clinical
documents~\cite{mullenbach2018explainable,huang2022plm} is superficially
similar but fundamentally different: it assigns codes to documents, not
constructs definitional sets of codes. Value set quality and
maintenance~\cite{steele2017quality, mo2015desiderata} and clinical
phenotyping~\cite{kirby2016phekb, yu2018phenorm} are related downstream
applications. To our knowledge no prior work has framed value set construction
as a learning problem or constructed a benchmark dataset for it.

\begin{figure}[!h]
    \centering
    \includegraphics[width=1\linewidth]{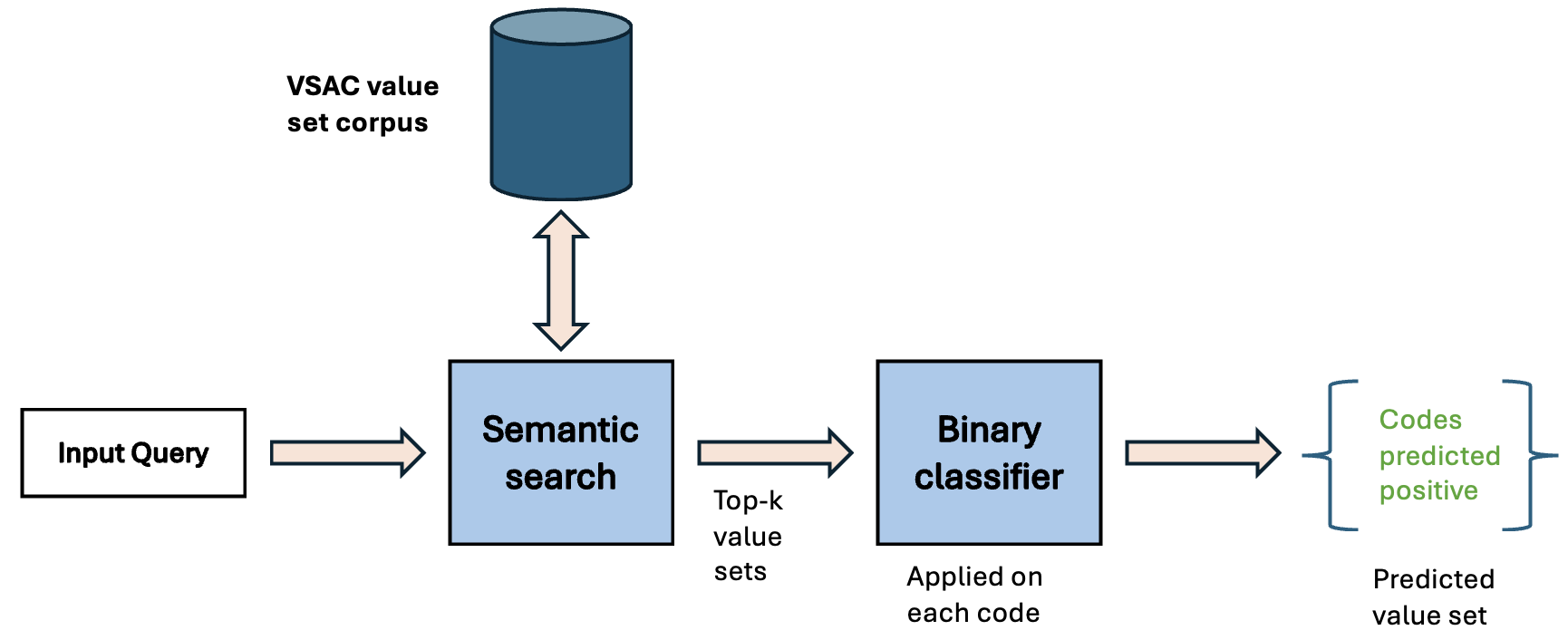}
    \caption{Overview of RASC workflow.}
    \label{fig:rasc}
\end{figure}

\section{Statistical Motivation for RASC}
\label{sec:theory}

Let $\mathcal{U}$ denote the universe of $(\text{code}, \text{system})$ pairs
with $|\mathcal{U}| = N$.
For a query $q$ (a value set title), the task is to identify the target label
set $Y(q) \subseteq \mathcal{U}$, where $|Y(q)| \ll N$ since each value set
contains only a small fraction of the full ontology (see Figure~\ref{fig:rasc} for visual overview).
We assume a latent relevance score $\theta_q : \mathcal{U} \to \mathbb{R}$
and threshold $\tau$ such that $c \in Y(q)$ if and only if
$\theta_q(c) \ge \tau$, with a margin $\gamma > 0$ separating members from
non-members.
A learned estimator $\hat{\theta}_n$ approximates $\theta_q$ from $n$
training examples, with sub-Gaussian score errors that concentrate uniformly
over any finite candidate set.

The primary comparator of RASC is direct generation of value sets via LLMs. To compare them mathematically, we specify two predictors.
\emph{Direct prediction} $\hat{Y}_{\mathrm{dir}}(q)$ thresholds
$\hat{\theta}_n$ over all of $\mathcal{U}$.
\emph{Retrieval-restricted prediction} $\hat{Y}_{\mathrm{RASC}}(q)$ first
retrieves a pool $\mathcal{C}(q)$ with $|\mathcal{C}(q)| \le K \ll N$ and
thresholds only within that pool.
Exact recovery is impossible when $Y(q) \nsubseteq \mathcal{C}(q)$, which
occurs with probability $\varepsilon_{\mathrm{ret}}$.

\begin{corollary}[Sample complexity gap; proof in Appendix~\ref{sec:appendix_theory}]
\label{cor:sample_complexity}
Fix $\delta \in (0,1)$.
Direct prediction achieves $\Pr(\hat{Y}_{\mathrm{dir}}(q) \neq Y(q)) \le \delta$
whenever
\[
    n \;\ge\; \frac{2\sigma^2}{\gamma^2} \log\frac{2N}{\delta}.
\]
Retrieval-restricted prediction achieves
$\Pr(\hat{Y}_{\mathrm{RASC}}(q) \neq Y(q)) \le \delta$
whenever $\delta > \varepsilon_{\mathrm{ret}}$ and
\[
    n \;\ge\; \frac{2\sigma^2}{\gamma^2}
    \log\frac{2K}{\delta - \varepsilon_{\mathrm{ret}}}.
\]
\end{corollary}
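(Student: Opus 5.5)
The plan is a union bound over the items each predictor thresholds, combined with the sub-Gaussian score-error assumption from Section~\ref{sec:theory}. I will make that assumption precise as follows. For each fixed $c \in \calU$ and query $q$, the error $\hat{\theta}_n(c) - \theta_q(c)$ satisfies
\[
\Pr\bigl(|\hat{\theta}_n(c) - \theta_q(c)| \ge t\bigr) \le 2\exp\!\bigl(-n t^2/(2\sigma^2)\bigr),
\]
so that $\sigma^2/n$ plays the role of the variance proxy after $n$ samples. I will also place the threshold in the middle of the margin: $\theta_q(c) \ge \tau + \gamma/2$ for $c \in Y(q)$ and $\theta_q(c) \le \tau - \gamma/2$ otherwise. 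The key deterministic step is then the observation that if $|\hat{\theta}_n(c) - \theta_q(c)| < \gamma/2$ for every $c$ in a set $A$, thresholding $\hat{\theta}_n$ at $\tau$ classifies every element of $A$ correctly.

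\textbf{Direct prediction.} Take $A = \calU$. A union bound over the $N$ items gives
\[
\Pr\bigl(\hat{Y}_{\mathrm{dir}}(q) \neq Y(q)\bigr) \le 2N \exp\!\bigl(-n\gamma^2/(8\sigma^2)\bigr).
\]
Setting the right-hand side to at most $\delta$ and solving for $n$ gives a bound of the form $n \ge (c\,\sigma^2/\gamma^2)\log(2N/\delta)$. The constant in the statement is $2$ rather than $8$. I will match it by calibrating the sub-Gaussian parameter so that the deviation event at scale $\gamma$ has tail $\exp(-n\gamma^2/(2\sigma^2))$, i.e.\ by treating $\gamma$ as the half-margin in the assumption. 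I expect the intended reading to be exactly this; it is a normalization choice, not a mathematical difficulty.

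\textbf{Retrieval-restricted prediction.} Let $E = \{Y(q) \subseteq \calC(q)\}$, so that $\Pr(E^c) = \varepsilon_{\mathrm{ret}}$. On $E$, the RASC output equals $Y(q)$ as soon as every $c \in \calC(q)$ is classified correctly. Items outside the pool are non-members on $E$ and are excluded automatically, so they can never be false positives. Hence
\[
\Pr\bigl(\hat{Y}_{\mathrm{RASC}}(q) \neq Y(q)\bigr) \le \varepsilon_{\mathrm{ret}} + \Pr\bigl(\exists\, c \in \calC(q) : |\hat{\theta}_n(c) - \theta_q(c)| \ge \gamma\bigr).
\]
Because $|\calC(q)| \le K$, the same union bound makes the second term at most $2K\exp(-n\gamma^2/(2\sigma^2))$. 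Requiring this to be at most $\delta - \varepsilon_{\mathrm{ret}}$, which is positive by hypothesis, yields the stated threshold on $n$.

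\textbf{Main obstacle.} The pool $\calC(q)$ is random and could depend on $\hat{\theta}_n$. If it did, a union bound over a data-dependent set would be invalid. I will handle this by conditioning on $\calC(q)$ and using that retrieval depends only on the corpus and the query, independently of the classifier's training sample. Under that independence, the conditional union bound over at most $K$ fixed items is legitimate, and integrating over $\calC(q)$ preserves the bound. The paper's phrase ``concentrate uniformly over any finite candidate set'' is what licenses this step, and I will cite it explicitly.
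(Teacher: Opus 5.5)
Your argument is correct and is essentially the paper's proof: the deterministic step \emph{uniform score error below the margin implies exact thresholding}, a union bound over $\calU$ or over $\calC(q)$, a split on the coverage event $\{Y(q)\subseteq\calC(q)\}$, and then solving $2N e^{-n\gamma^2/(2\sigma^2)}\le\delta$ and $2K e^{-n\gamma^2/(2\sigma^2)}\le\delta-\varepsilon_{\mathrm{ret}}$ for $n$. The $\gamma/2$ detour is unnecessary, because the paper's margin assumption (Assumption~\ref{ass:margin}) is exactly $\theta_q(c)\ge\tau+\gamma$ for members and $\theta_q(c)\le\tau-\gamma$ for non-members; that is your ``half-margin'' reading, and the constant $2$ comes from it, not from rescaling $\sigma$, which would change the assumption. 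Your explicit conditioning on $\calC(q)$, using that the pool is independent of $\hat\theta_n$, makes rigorous a step the paper's proof takes silently.
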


When $\varepsilon_{\mathrm{ret}}$ is small and $K \ll N$, RASC requires only
$O((\sigma^2/\gamma^2)\log K)$ samples to reach a target recovery probability,
versus $O((\sigma^2/\gamma^2)\log N)$ for direct prediction.
In clinical vocabulary settings $N/K$ is expected to be very large, so this
reduction in log-complexity is substantial.
The $\varepsilon_{\mathrm{ret}}$ term sets a hard floor on achievable error
that no classifier can overcome regardless of $n$; when retrieval recall is
high this floor is low, and the estimation advantage of RASC dominates. It is worth noting here that the comparison presented here is not specific to auto-regressive language models, but rather to a simplified direct generator model for the sake of tractability.

\section{Methods}
\label{sec:methods}

\subsection{VSAC Corpus}
\label{sec:methods:dataset}
We bulk-downloaded all publicly available value sets from the Value Set
Authority Center (VSAC)~\cite{vsac2013} via the HL7 FHIR
\texttt{ValueSet/\$expand} API under a free UMLS license, paginating through
the \texttt{ValueSet} search endpoint to enumerate all OIDs.
The resulting corpus, after filtering out value-sets containing less than three codes, contains \textbf{11,803 value sets} spanning 15
terminology systems and 847 publisher organizations.
Value set sizes follow a heavy-tailed distribution (median~9 codes;
95th percentile~312 codes); 80.4\% carry no human-authored description;
over 80\% draw codes from a single system.

\subsection{Semantic Retrieval}
\label{sec:methods:retrieval}

We embed value set titles using SAPBert~\cite{liu2021sapbert}, a BERT encoder
pretrained on UMLS synonym pairs that maps biomedical entity mentions to a
metric space robust to surface-form variation.
We use CLS-token pooling throughout, instantiating the model explicitly with
\texttt{pooling\_mode\_cls\_token=True} to ensure consistent behavior across
index construction and query-time inference.
Robustness to typographic variation is validated empirically: case-perturbed
title queries achieve 98.5\% top-1 accuracy on a held-out sample of 200
value sets.

All 11,803 title embeddings are stored in a FAISS \texttt{IndexFlatIP}
index~\cite{johnson2019faiss}; at this scale exact search completes in under
5\,ms per query.
We embed titles only, as 80.4\% of value sets lack descriptions, making
description-augmented embeddings inconsistent at inference time.

For a target value set $v^*$ with title $t^*$, we retrieve the $K = 10$ most
similar value sets by cosine similarity (excluding $v^*$ itself) and take
their union of codes as the candidate pool $\mathcal{C}(v^*)$, collapsing
duplicate $(\text{code}, \text{system})$ pairs by retaining the
highest-scoring entry.
Each candidate $c$ inherits a similarity score $s_c \in [-1,1]$ from its
source value set.
Retrieval coverage is quantified by
\begin{equation}
    \mathrm{RR}@K(v^*)
    = \frac{|\,\mathrm{codes}(v^*) \cap \mathcal{C}(v^*)\,|}
           {|\,\mathrm{codes}(v^*)\,|},
    \label{eq:rrk}
\end{equation}
which upper-bounds downstream classifier recall.

\subsection{Train/Validation/Test Splits}
\label{sec:methods:splits}

The splitting unit is the value set: all candidate pairs from a given value
set are assigned to the same split, preventing label leakage across concepts.
We apply a two-level stratification scheme.
First, the two largest publishers---Clinical Architecture ($n=596$) and CSTE
Steward ($n=590$)---are held out entirely for test, evaluating
cross-organization generalization to stewards unseen during training.
The remaining value sets are divided 70/15/15 (train/val/test) by stratified
sampling on $(\texttt{type},\, \texttt{publisher\_bin})$, preserving the
joint type--publisher distribution across splits.
Value sets with fewer than three member codes or missing expansions are
excluded prior to split assignment.
Table~\ref{tab:splits} reports pair-level counts after candidate-pool
construction.

\subsection{Feature Representation}
\label{sec:methods:features}

Each example is a (value set, candidate code) pair.
The feature vector $\mathbf{x} \in \mathbb{R}^{1545}$ concatenates: (1)
the SAPBert CLS embedding of the value set title
$\mathbf{e}_{t^*} \in \mathbb{R}^{768}$; (2) the SAPBert CLS embedding of
the candidate display name $\mathbf{e}_{d_c} \in \mathbb{R}^{768}$; (3) a
one-hot code system indicator over eight systems (SNOMED-CT, ICD-10-CM,
RxNorm, LOINC, CPT, ICD-10-PCS, HCPCS, OTHER); and (4) the retrieval
similarity score $s_c$.
The title embedding is identical for all candidates evaluated against the
same target; the code system indicator encodes structural constraints absent
from the semantic embeddings (e.g.\ a SNOMED-CT value set should not contain
RxNorm codes).
Embeddings are computed once over all unique strings prior to feature
assembly---7,052 unique titles and 202,573 unique display names---and cached,
reducing encoder calls proportionally to the deduplication factor.

\subsection{Classification Models}
\label{sec:methods:models}

Code inclusion is posed as binary classification: given $\mathbf{x}$,
estimate $\hat{y} = P(c \in \mathrm{codes}(v^*) \mid \mathbf{x})$.
Class imbalance is
addressed via weighted binary cross-entropy,
decision threshold tuned to maximize F1 on the validation set.

\paragraph{LightGBM.}
A gradient-boosted tree classifier trained on the 1545-dimensional feature
vector, serving as a computationally efficient non-neural reference.

\paragraph{MLP.}
A three-hidden-layer network (1545$\to$512$\to$256$\to$64$\to$1) with batch
normalization, ReLU activations, and dropout ($p=0.3$) after each layer
(941k parameters).
Training uses AdamW ($\text{lr}=3{\times}10^{-4}$, weight decay $10^{-4}$)
with \texttt{ReduceLROnPlateau} scheduling, batch size 2048, and early
stopping on validation loss (patience~5).

\paragraph{Cross-encoder.}
We fine-tune SAPBert (110M parameters) as a cross-encoder by presenting the
value set title and candidate display name as a single sentence pair:
\begin{equation}
    \texttt{[CLS]}\ t_1 \cdots t_m\ \texttt{[SEP]}\ d_1 \cdots d_n\
    \texttt{[SEP]},
    \label{eq:ce_input}
\end{equation}
truncated to 128 tokens.
The \texttt{[CLS]} representation is passed to a linear classification head
$\hat{y} = \sigma(\mathbf{w}^\top \mathbf{z} + b)$.
Unlike the MLP, which interacts title and code only through concatenation of
independently computed embeddings, the cross-encoder attends jointly across
all title and code tokens through 12 transformer layers, enabling
fine-grained token-level interactions that the bi-encoder feature space
cannot represent.
Training uses AdamW ($\text{lr}=2{\times}10^{-5}$, weight decay $10^{-2}$)
with linear warmup over 6\% of steps, gradient accumulation over 4 steps
(effective batch size~64), fp16 mixed precision, and gradient checkpointing. The cross-encoder was trained for 3 epochs.

\subsection{Evaluation}
\label{sec:methods:eval}

\paragraph{Baselines.}
The \emph{retrieval-only} baseline predicts inclusion for every candidate,
achieving recall equal to $\mathrm{RR}@K$ at the value-set level and
precision equal to the pool positive rate.
The \emph{LLM baseline} evaluates GPT-4o zero-shot on
the full test set: each value set is presented with only its name and allowed
code systems, with no access to the candidate pool or VSAC corpus.
Outputs are matched via exact $(\text{code}, \text{system})$ pair matching
after normalizing system URIs to canonical short forms.

\paragraph{Metrics.}
All methods are evaluated at the \emph{value-set level}: precision, recall,
and F1 are computed per value set and macro-averaged across the test set,
providing a unified frame for classifiers and GPT-4o alike. For classifiers we additionally report code-pair-level AUROC and average
precision (AP), which are threshold-free.
For the LLM baseline we report a \emph{hallucination rate}: the fraction of
returned codes absent from the full test corpus, indicating fabricated
identifiers rather than merely incorrect code assignments.
Post-hoc stratified analyses are conducted across value set type, size, and
publisher; precision is reported separately on the $\mathrm{RR}@K=1.0$ subset
, where retrieval is perfect and any precision deficit is
attributable to the model.

\subsection{Comparing LLM-as-Classifier with Zero-Shot Generation}
\label{sec:methods:llm_comparison}

To isolate the contribution of retrieval grounding from the contribution of
learned classification, we compare two GPT-4o conditions on a
cost-constrained subsample of the test set.
The zero-shot generation condition is described in Section~\ref{sec:methods:eval};
here we introduce a complementary \emph{LLM-as-classifier} condition in which
the retrieved candidate pool is provided directly in context.

We restrict this comparison to value sets with $|S^*| \le 50$ true codes,
for which presenting the full candidate pool in-context is tractable, and
draw a stratified random sample of 100 value sets across clinical types.
For each value set, the model receives the value set title, the allowed code
systems, and the full candidate pool as a list of \texttt{(code, system,
display)} triples, and is asked to return the subset it judges relevant.
The system prompt is otherwise identical to the zero-shot generation prompt
(Appendix~\ref{sec:appendix:prompt}), with the addition of an instruction to
select only from the provided candidates.

This condition is not practical at scale: candidate pools of 100--600 codes
produce prompts of 1{,}000--6{,}000 tokens per value set, making inference
roughly two orders of magnitude more expensive than the MLP and fundamentally
limited to small value sets by context length constraints.

\section{Results}

\subsection{Dataset Statistics}

\begin{table}[t]
\centering
\caption{
    Pair-level split statistics after candidate-pool construction.
    The lower positive rate in the test set reflects the two held-out
    publisher organizations (Clinical Architecture and CSTE Steward), whose
    value sets tend to be smaller and more specialized, yielding lower
    retrieval recall and a smaller fraction of true codes in the candidate
    pool.
}
\label{tab:splits}
\begin{tabular}{lrrrr}
\toprule
Split & Examples & Positives & Pos.\ Rate & Publisher source \\
\midrule
Train      & 3,520,599 & 432,851 & 12.3\% & In-distribution \\
Validation &   802,303 & 103,657 & 12.9\% & In-distribution \\
Test       & 2,011,009 & 151,630 &  7.5\% & In-dist.\ + held-out \\
\bottomrule
\end{tabular}
\end{table}

Table~\ref{tab:splits} reports pair-level example counts after candidate-pool
construction across all value sets.
The positive rate is 7.5\% on the test set versus approximately 12\% on
train and validation, reflecting the held-out publishers: Clinical Architecture
and CSTE Steward author smaller, more specialized value sets with lower
retrieval overlap against the in-distribution corpus.
The overall ratio of roughly 1 positive per 13 candidates motivates the
classification stage: reducing this review burden is the central objective.

\subsection{Classifier Results}

\begin{table}[t]
\centering
\caption{
    Code-pair-level test set performance.
    Decision thresholds are tuned on the validation set.
}
\label{tab:results}
\begin{tabular}{lccccc}
\toprule
Model & AUROC & Avg.\ Prec. & Precision & Recall & F1 \\
\midrule
Retrieval-only & ---   & ---   & 0.092 & 1.000 & 0.140 \\
LightGBM       & 0.745 & 0.238 & 0.173 & 0.316 & 0.190 \\
MLP            & 0.799 & 0.271 & 0.226 & 0.470 & 0.250 \\
Cross-Encoder  & 0.852 & 0.374 & 0.272 & 0.483 & 0.298 \\
\bottomrule
\end{tabular}
\end{table}

Table~\ref{tab:results} reports code-pair-level test performance.
All three classifiers substantially outperform the retrieval-only baseline on
precision and F1, and performance improves monotonically from LightGBM to MLP
to Cross-Encoder across all threshold-free metrics.

\paragraph{Precision--recall tradeoff.}
The retrieval-only baseline presents approximately 13 irrelevant candidates
per true positive; LightGBM reduces this ratio while retaining 31.6\% of
true codes, and the MLP improves further to 47.0\% recall at higher
precision.
The Cross-Encoder achieves the best performance on all metrics (AUROC~0.852,
AP~0.374, F1~0.298), representing gains of +0.053 AUROC and +0.103 AP over
the MLP.
The precision gain (+0.046) is substantially larger than the recall difference
(+0.013), indicating that the cross-encoder's joint attention over
(title, code) token pairs primarily reduces false positives rather than
recovering additional true codes missed by the MLP.

\subsection{Value-Set Level Results}

\begin{table}[t]
\centering
\caption{
    Value-set-level test performance (macro averages).}
\label{tab:vs_results}
\begin{tabular}{lcccc}
\toprule
Model & Precision & Recall & F1 & SE(F1) \\
\midrule
Retrieval-only        & 0.092 & 0.553 & 0.136 & 0.003 \\
LightGBM              & 0.173 & 0.316 & 0.190 & 0.005 \\
MLP                   & 0.226 & 0.470 & 0.250 & 0.005 \\
Cross-Encoder         & 0.272 & 0.483 & 0.298 & 0.006 \\
GPT-4o (zero-shot)    & 0.169 & 0.100 & 0.105 & 0.004 \\
\bottomrule
\end{tabular}
\end{table}

Table~\ref{tab:vs_results} reports macro-averaged value-set-level results,
placing all methods on a common footing.
The Cross-Encoder achieves the highest F1 (0.298) and precision (0.272),
followed by the MLP (0.250), LightGBM (0.190), GPT-4o (0.105), and the
retrieval-only baseline (0.136).

GPT-4o's recall collapses to 0.100 --- well below the retrieval-only
baseline (0.553) --- with high variance ($\text{SE}=0.004$, $\mu=0.105$),
indicating that correct returns are concentrated in a small fraction of value sets. Of all the codes returned by GPT-4o across all value sets, 48.6\% are absent from the full test corpus, confirming hallucination as the dominant failure mode for zero-shot generation over structured clinical
vocabularies.

\subsubsection{Stratification by value set type.}
Figure~\ref{fig:stratified}a shows F1 by value set type.
The Cross-Encoder leads in every category, with the largest absolute gains
over the MLP on Condition/Diagnosis (+0.078) and Lab/Observation (+0.060).
GPT-4o remains competitive on Condition/Diagnosis (F1~0.277) and Procedure
(0.236) --- categories where concepts are expressible in natural language and
code sets are compact --- but degrades to near zero on Medication (F1~0.003)
and Condition/Clinical (0.029), which require enumerating large numbers of
RxNorm and SNOMED-CT codes not reliably memorized from pretraining.

\begin{figure}[!h]
\centering
\begin{minipage}{\linewidth}
    \centering
    \includegraphics[width=0.7\linewidth]{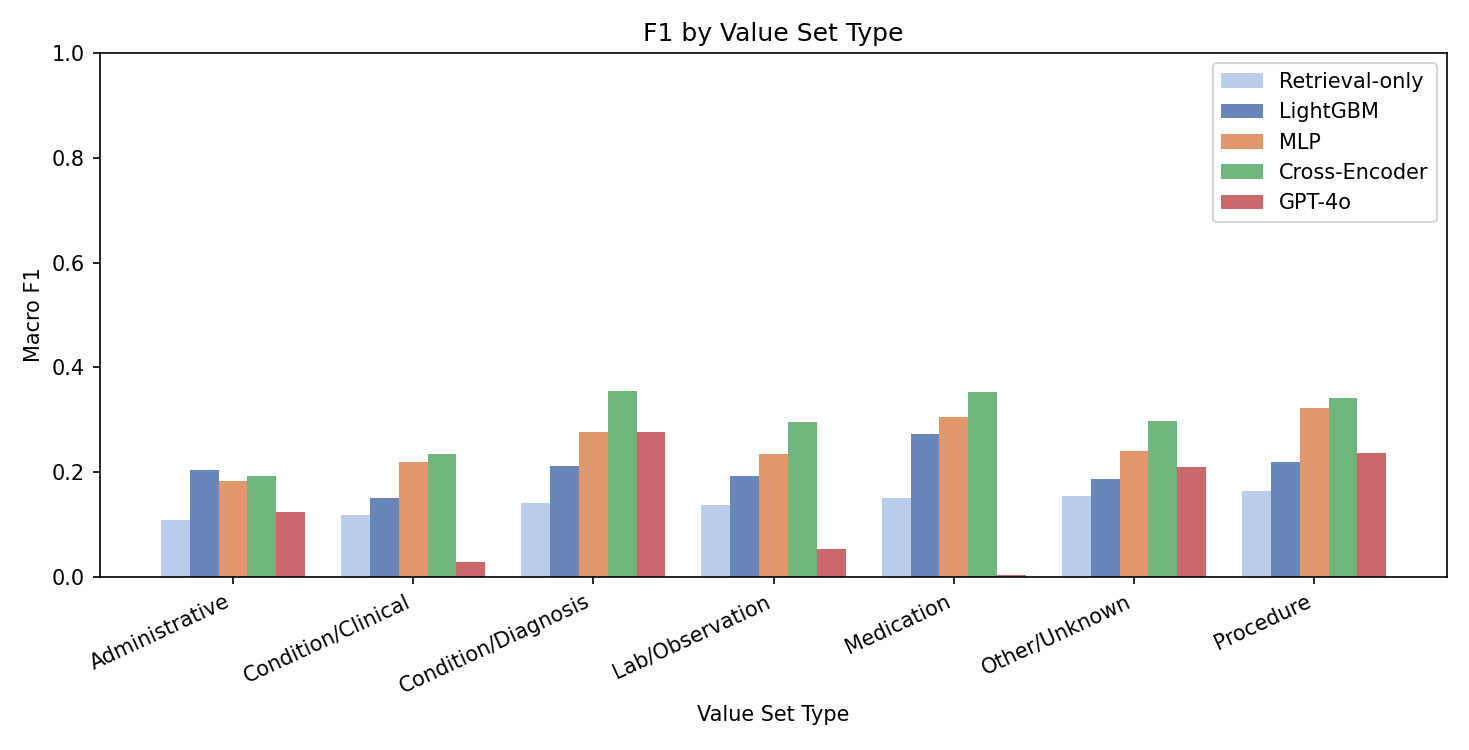}
    \label{fig:stratified_type}
\end{minipage}
\medskip
\begin{minipage}{\linewidth}
    \centering
    \includegraphics[width=0.7\linewidth]{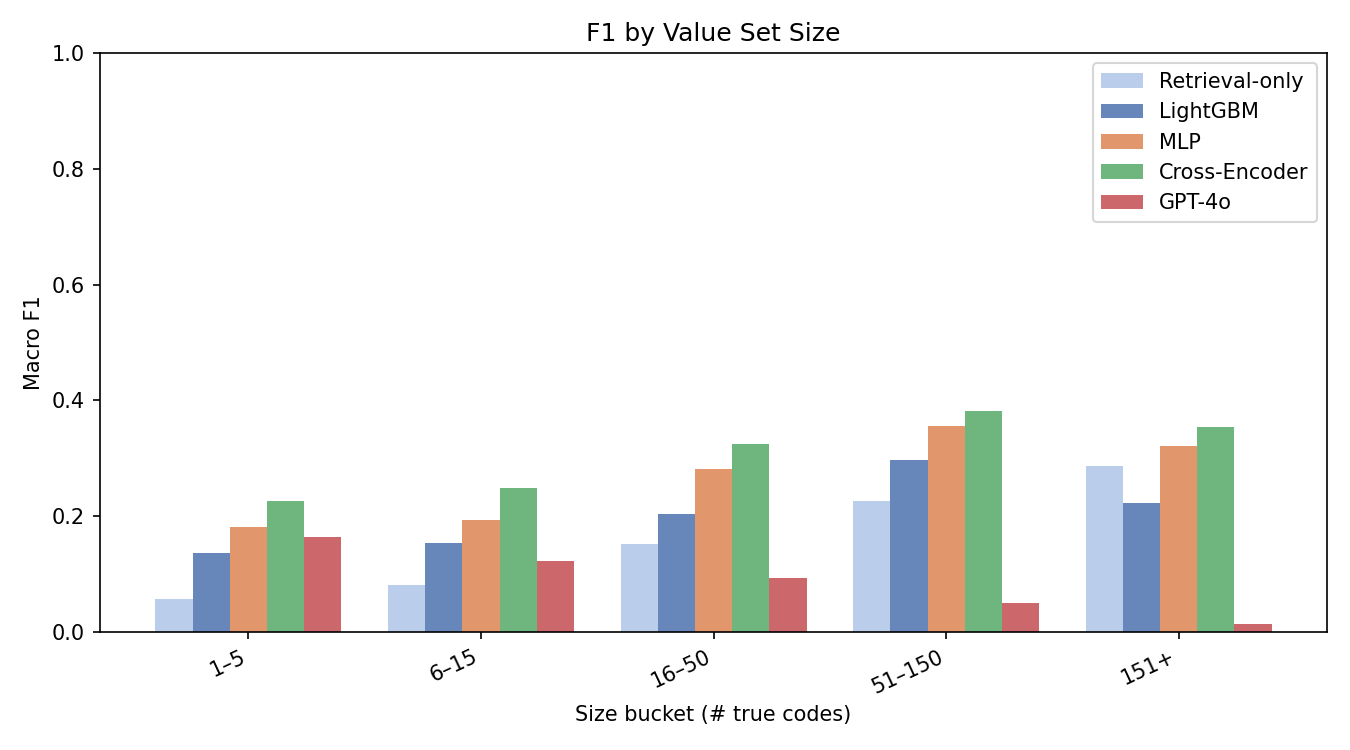}
    \label{fig:stratified_size}
\end{minipage}
\caption{Stratified value-set-level F1 across clinical type (a) and value set
size (b).}
\label{fig:stratified}
\end{figure}

\subsubsection{Stratification by value set size.}
Figure~\ref{fig:stratified}b reveals a crossover between GPT-4o and the
classifiers as a function of value set size.
For very small value sets (1--5 true codes), GPT-4o (F1~0.164) is
competitive with LightGBM (0.137) and approaches the MLP (0.182) and
Cross-Encoder (0.226).
Beyond 15 codes, GPT-4o degrades monotonically (F1~0.014 for sets with
$>$150 codes), while all three classifiers improve with set size; the
Cross-Encoder reaches F1~0.354 on the largest sets.
This shows that RASC's estimation advantage over direct generation
grows with target set size, as the pool-constrained classification problem
becomes increasingly tractable relative to unconstrained generation over
$|U| \approx 10^5$--$10^6$ identifiers.

\subsubsection{Precision at full retrieval coverage.}
On the 905 value sets where $\mathrm{RR}@K = 1.0$ (candidate pool contains
every true code), any precision deficit is attributable entirely to the model. Macro-averaged precision on this subset is: Cross-Encoder~0.438, MLP~0.410, LightGBM~0.349, GPT-4o~0.199, and retrieval-only~0.160.
The gap between the Cross-Encoder and the retrieval-only baseline quantifies
the classification value added under ideal retrieval, and suggests that
further gains are achievable through improved retrieval coverage of the
remaining value sets.

\subsection{Zero-Shot LLM vs.\ LLM-as-Classifier}
\label{sec:results:llm_pool}

The zero-shot GPT-4o results in Table~\ref{tab:results_vsl} leave open whether the classifier--LLM gap reflects a fundamental limitation of LLM-based generation or simply the absence of retrieval grounding. To isolate this, we provide GPT-4o with the full candidate pool in-context and
re-evaluate on a stratified subsample of 100 value sets with $|S^*| \le 50$
(to keep pool-sized prompts tractable in number of tokens).

\begin{table}[h]
\centering
\caption{GPT-4o zero-shot vs.\ pool-grounded, macro-averaged over 95 value sets
with $|S^*| \le 50$.}
\label{tab:llm_pool}
\begin{tabular}{lccc}
\toprule
Condition & Precision & Recall & F1 \\
\midrule
GPT-4o zero-shot & 0.131 & 0.086 & 0.089 \\
GPT-4o w/ pool   & 0.284 & 0.456 & 0.294 \\
\bottomrule
\end{tabular}
\label{tab:results_vsl}
\end{table}

Pool grounding triples F1 (Table~\ref{tab:llm_pool}), with the gain driven
almost entirely by recall. GPT-4o recognises
relevant codes when shown them but cannot generate them reliably from memory.
This suggests that LLM based generation can be tractable if the LLM is treated as a classifier grounded in a pool of retrieved codes. However, such an approach will likely not be as scalable or cost-effective as using a classifier model.

\section{Benchmark Release}
\label{sec:benchmark}

We provide code to reproduce the VSAC data used for model training and evaluation, to support reproducible research on corpus-grounded set completion. Since VSAC data is distributed under a UMLS license that precludes direct redistribution, we provide a download script that accepts a UMLS API key, bulk-fetches all value set expansions via the FHIR \texttt{ValueSet/\$expand} endpoint, and reconstructs the corpus used in this paper. A split manifest---containing OIDs, split assignments,
$\mathrm{RR}@K$ scores, value set type, and publisher for all 11,803 value
sets---is released as a lightweight index (no VSAC content) that users can
match against their local download to recover identical train/validation/test
splits. The candidate pool construction pipeline then produces the labeled \texttt{(value set, candidate code)} pairs from the downloaded corpus, with deterministic retrieval and de-duplication ensuring reproducible datasets across users. The embedding-model-agnostic design means any representation---dense embeddings, sparse retrievers, or hand-crafted features---can be evaluated against the same labels. The training code for all three classifiers is also made available in our github repository: \href{https://github.com/mukhes3/RASC}{https://github.com/mukhes3/RASC}.


\section{Conclusion}
\label{sec:discussion}

We introduced RASC, a two-stage framework for set completion over large discrete vocabularies
that replaces de novo generation with retrieval followed by binary classification. The key observation
is that expert-curated set corpora, present in most knowledge-intensive domains, can be leveraged
to reduce an intractable generation problem over $|\mathcal{U}|$ items to a tractable classification
problem over a much smaller candidate pool. Under standard margin and concentration assumptions,
this reduction yields a sample complexity gap of $O(\log K)$ versus $O(\log N)$, a condition
verified empirically: the classifier advantage over zero-shot generation grows monotonically with
value set size, directly consistent with the theoretical prediction.

On 11,803 VSAC value sets, RASC classifiers substantially outperform zero-shot GPT-4o generation, which returns codes absent from VSAC entirely at a 48.6\% rate and achieves value-set-level F1 of 0.105. Providing GPT-4o with the retrieved candidate pool in-context closes most of this gap on small value sets, suggesting that retrieval grounding might enable better quality LLM based generation. However, such an approach would suffer from higher cost and poor scalability. Cross-organization generalization to two entirely held-out publisher stewards further suggests that the learned representations reflect clinically meaningful structure rather than publisher-specific authoring patterns.

Retrieval quality remains the
binding constraint: on value sets with perfect retrieval coverage (RR@$K = 1.0$), the cross-encoder
achieves precision 0.438, suggesting that classification is effective when the pool is adequate and
that retrieval improvement is the most promising direction for further gains. The framework also
assumes a high-quality retrieval corpus exists; its behavior in domains where the corpus is sparse,
noisy, or rapidly evolving remains to be characterized. Future work will look at extending RASC
to other application domains such as gene panel construction and systematic review inclusion,
where the same structural conditions hold: large structured vocabularies, small target sets, and
existing curated collections that encode prior expert judgment. We release code to generate the identical VSAC training and evaluation data to enable the use of a consistent benchmark for future efforts in this direction.


\bibliographystyle{acm}
\bibliography{ref}

@misc{vsac2013,
  author       = {{National Library of Medicine}},
  title        = {Value Set Authority Center ({VSAC})},
  howpublished = {\url{https://vsac.nlm.nih.gov}},
  year         = {2013},
  note         = {Accessed 2025}
}

@article{steele2017quality,
  author    = {Steele, Nathan A and Bhattacharyya, Sanjukta
               and Bhagwat, Manasi and Morales, Allyn
               and Desai, Jay R and Kho, Abel N},
  title     = {Quality and consistency of {VSAC} value sets for
               clinical quality measurement},
  journal   = {Journal of the American Medical Informatics Association},
  volume    = {24},
  number    = {4},
  pages     = {716--722},
  year      = {2017},
  publisher = {Oxford University Press},
  doi       = {10.1093/jamia/ocw183}
}

@article{mo2015desiderata,
  author    = {Mo, Huan and Thompson, William K and Rasmussen, Luke V
               and Pacheco, Jennifer A and Jiang, Guoqian
               and Kiefer, Richard and Zhu, Qian and Xu, Jie
               and Montague, Elizabeth and Waitman, Lemuel R
               and others},
  title     = {Desiderata for computable representations of
               electronic health records-driven phenotype algorithms},
  journal   = {Journal of the American Medical Informatics Association},
  volume    = {22},
  number    = {6},
  pages     = {1220--1230},
  year      = {2015},
  publisher = {Oxford University Press},
  doi       = {10.1093/jamia/ocv112}
}

@inproceedings{liu2021sapbert,
  author    = {Liu, Fangyu and Shareghi, Ehsan and Meng, Zaiqiao
               and Basaldella, Marco and Collier, Nigel},
  title     = {Self-Alignment Pretraining for Biomedical Entity
               Representations},
  booktitle = {Proceedings of the 2021 Conference of the North
               American Chapter of the Association for
               Computational Linguistics: Human Language
               Technologies ({NAACL-HLT})},
  pages     = {4228--4238},
  year      = {2021},
  publisher = {Association for Computational Linguistics},
  doi       = {10.18653/v1/2021.naacl-main.334}
}

@inproceedings{mullenbach2018explainable,
  author    = {Mullenbach, James and Wiegreffe, Sarah
               and Duke, Jon and Sun, Jimeng and Eisenstein, Jacob},
  title     = {Explainable Prediction of Medical Codes from
               Clinical Text},
  booktitle = {Proceedings of the 2018 Conference of the North
               American Chapter of the Association for
               Computational Linguistics: Human Language
               Technologies ({NAACL-HLT})},
  pages     = {1101--1111},
  year      = {2018},
  publisher = {Association for Computational Linguistics},
  doi       = {10.18653/v1/N18-1100}
}

@inproceedings{huang2022plm,
  author    = {Huang, Chao-Wei and Tsai, Shang-Chi
               and Chen, Hen-Hsen},
  title     = {{PLM-ICD}: Automatic {ICD} Coding with Pretrained
               Language Models},
  booktitle = {Proceedings of the 4th Clinical Natural Language
               Processing Workshop},
  pages     = {10--20},
  year      = {2022},
  publisher = {Association for Computational Linguistics},
  doi       = {10.18653/v1/2022.clinicalnlp-1.2}
}

@article{kirby2016phekb,
  author    = {Kirby, Jacqueline C and Speltz, Peter
               and Rasmussen, Luke V and Basford, Melissa
               and Gottesman, Omri and Peissig, Peggy L
               and others},
  title     = {{PheKB}: a catalog and workflow for creating
               electronic phenotype algorithms for
               transportability},
  journal   = {Journal of the American Medical Informatics
               Association},
  volume    = {23},
  number    = {6},
  pages     = {1046--1052},
  year      = {2016},
  publisher = {Oxford University Press},
  doi       = {10.1093/jamia/ocv202}
}

@article{yu2018phenorm,
  author    = {Yu, Sheng and Chakrabortty, Abhishek
               and Ho, Yu-Han and Hidalgo, Bertrand
               and Denny, Joshua C and Blessing, Judith
               and others},
  title     = {{PheNorm}: A Gene-Regularized Topic Model for
               Unsupervised {EHR} Phenotyping},
  journal   = {Journal of the American Medical Informatics
               Association},
  volume    = {25},
  number    = {1},
  pages     = {54--60},
  year      = {2018},
  publisher = {Oxford University Press},
  doi       = {10.1093/jamia/ocx111}
}

@inproceedings{lewis2020retrieval,
  author    = {Lewis, Patrick and Perez, Ethan and Piktus, Aleksandra
               and Petroni, Fabio and Karpukhin, Vladimir
               and Goyal, Naman and K{\"u}ttler, Heinrich
               and Lewis, Mike and Yih, Wen-tau and Rockt{\"a}schel, Tim
               and others},
  title     = {Retrieval-Augmented Generation for
               Knowledge-Intensive {NLP} Tasks},
  booktitle = {Advances in Neural Information Processing Systems
               ({NeurIPS})},
  volume    = {33},
  pages     = {9459--9474},
  year      = {2020}
}

@article{sivarajkumar2024gpt4,
  author    = {Sivarajkumar, Sonish and Kelley, Mark
               and Samolyk-Mazzanti, Alyssa and Visweswaran, Shyam
               and Wang, Yanshan},
  title     = {An Empirical Evaluation of Prompting Strategies
               for Large Language Models in Zero-Shot Clinical
               Natural Language Processing},
  journal   = {Journal of the American Medical Informatics
               Association},
  volume    = {31},
  number    = {9},
  pages     = {1935--1945},
  year      = {2024},
  publisher = {Oxford University Press},
  doi       = {10.1093/jamia/ocae122}
}

@article{johnson2019faiss,
  author    = {Johnson, Jeff and Douze, Matthijs and J{\'e}gou, Herv{\'e}},
  title     = {Billion-scale similarity search with {GPUs}},
  journal   = {IEEE Transactions on Big Data},
  volume    = {7},
  number    = {3},
  pages     = {535--547},
  year      = {2021},
  publisher = {IEEE},
  doi       = {10.1109/TBDATA.2019.2921572}
}

@article{tsoumakas2007multi,
  author    = {Tsoumakas, Grigorios and Katakis, Ioannis},
  title     = {Multi-Label Classification: An Overview},
  journal   = {International Journal of Data Warehousing
               and Mining},
  volume    = {3},
  number    = {3},
  pages     = {1--13},
  year      = {2007},
  doi       = {10.4018/jdwm.2007070101}
}

@inproceedings{bhatia2015sparse,
  author    = {Bhatia, Kush and Jain, Himanshu and Kar, Purushottam
               and Varma, Manik and Jain, Prateek},
  title     = {Sparse Local Embeddings for Extreme Multi-label
               Classification},
  booktitle = {Advances in Neural Information Processing Systems
               ({NeurIPS})},
  volume    = {28},
  pages     = {730--738},
  year      = {2015}
}

@article{zhang2020deep,
  author    = {Zhang, Xingyou and Lipman, Yaron and Lee, Honglak},
  title     = {Deep Set Prediction Networks},
  booktitle = {Advances in Neural Information Processing Systems
               ({NeurIPS})},
  volume    = {32},
  year      = {2019}
}

@misc{panelapp,
  author       = {{Genomics England}},
  title        = {{PanelApp}: A publicly available gene panel
                  repository},
  howpublished = {\url{https://panelapp.genomicsengland.co.uk}},
  year         = {2019},
  note         = {Accessed 2025}
}

@article{o2019using,
  author    = {O'Mara-Eves, Alison and Thomas, James
               and McNaught, John and Miwa, Makoto
               and Ananiadou, Sophia},
  title     = {Using text mining for study identification in
               systematic reviews: a systematic review of
               current approaches},
  journal   = {Systematic Reviews},
  volume    = {4},
  number    = {1},
  pages     = {5},
  year      = {2015},
  publisher = {BioMed Central},
  doi       = {10.1186/2046-4053-4-5}
}

\newpage
\appendix

\section{VSAC Corpus Analysis}
\label{sec:eda}

We present an exploratory analysis of the entire \textbf{unfiltered} VSAC corpus to characterise the data distribution and motivate several design decisions in the main paper. 

\subsection{Temporal Growth}

Figure~\ref{fig:temporal} shows value set creation and update activity by
year. Growth has been substantial and accelerating: fewer than 100 value
sets were added in each of 2012 and 2013, rising to approximately 1,750 per
year in 2023--2024. More value sets were created in those two years alone
than in the entire period 2012--2020 combined, reflecting both the expansion
of eCQM reporting requirements and the growing adoption of FHIR-based quality
measurement. 

\begin{figure}[h]
    \centering
    \includegraphics[width=\textwidth]{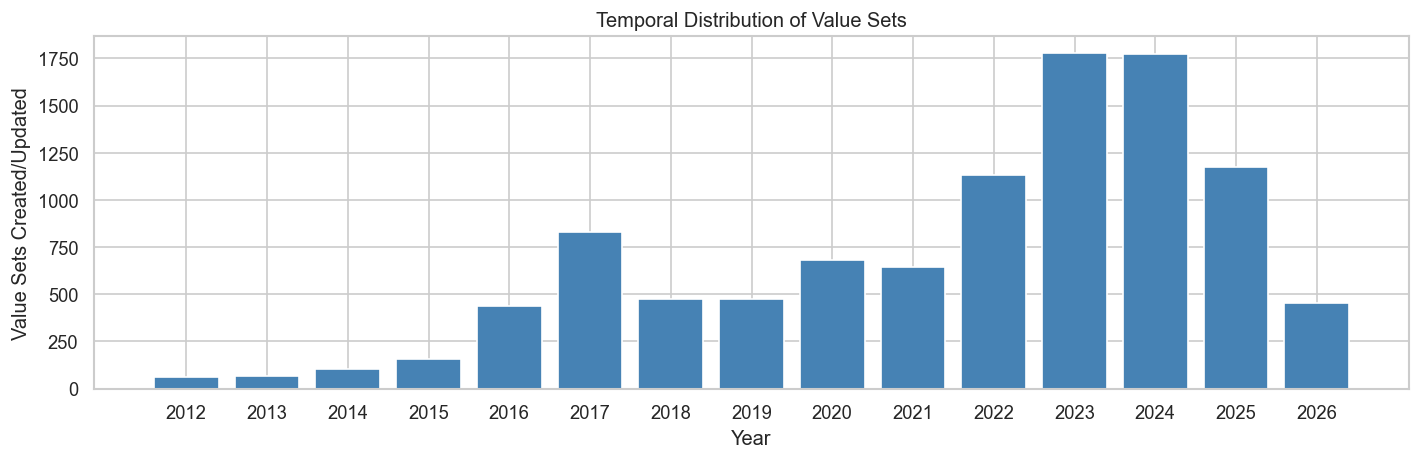}
    \caption{Temporal distribution of VSAC value sets by year of creation or
    last update. Growth accelerates sharply after 2021, reaching approximately
    1{,}750 value sets per year in 2023--2024.}
    \label{fig:temporal}
\end{figure}

\subsection{Terminology System and Value Set Type}

Figure~\ref{fig:codesystem} shows the distribution of value sets by primary
code system. SNOMED-CT dominates with approximately 4{,}250 value sets,
followed by ICD-10-CM (${\sim}2{,}500$), LOINC (${\sim}1{,}500$), and RxNorm
(${\sim}1{,}300$). Critically, the right panel shows that approximately 8{,}800 of the ${\sim}10{,}700$ analysed value sets draw codes from a single
terminology system, with roughly 1{,}000 spanning two systems and approximately 350 spanning three. 

\begin{figure}[h]
    \centering
    \includegraphics[width=\textwidth]{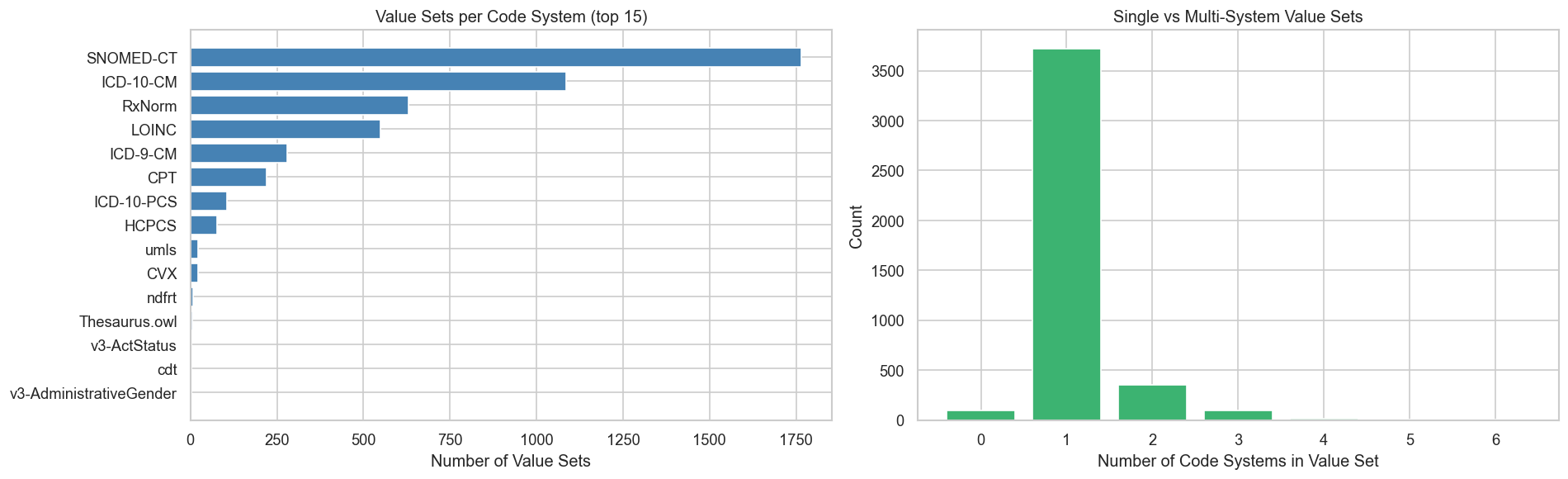}
    \caption{Left: number of value sets per code system (top 15). SNOMED-CT
    and ICD-10-CM together account for over 60\% of the corpus. Right:
    distribution of the number of distinct code systems per value set; over
    80\% of value sets are single-system.}
    \label{fig:codesystem}
\end{figure}

Figure~\ref{fig:type} shows the inferred semantic type distribution alongside
value set size stratified by type. Condition/Clinical and
Condition/Diagnosis together account for over 60\% of the
corpus. Medication value sets exhibit the largest size
variance, with a heavy tail exceeding 200 codes.
Lab/Observation value sets  are comparatively compact. This size heterogeneity motivates stratified evaluation across types.

\begin{figure}[h]
    \centering
    \includegraphics[width=\textwidth]{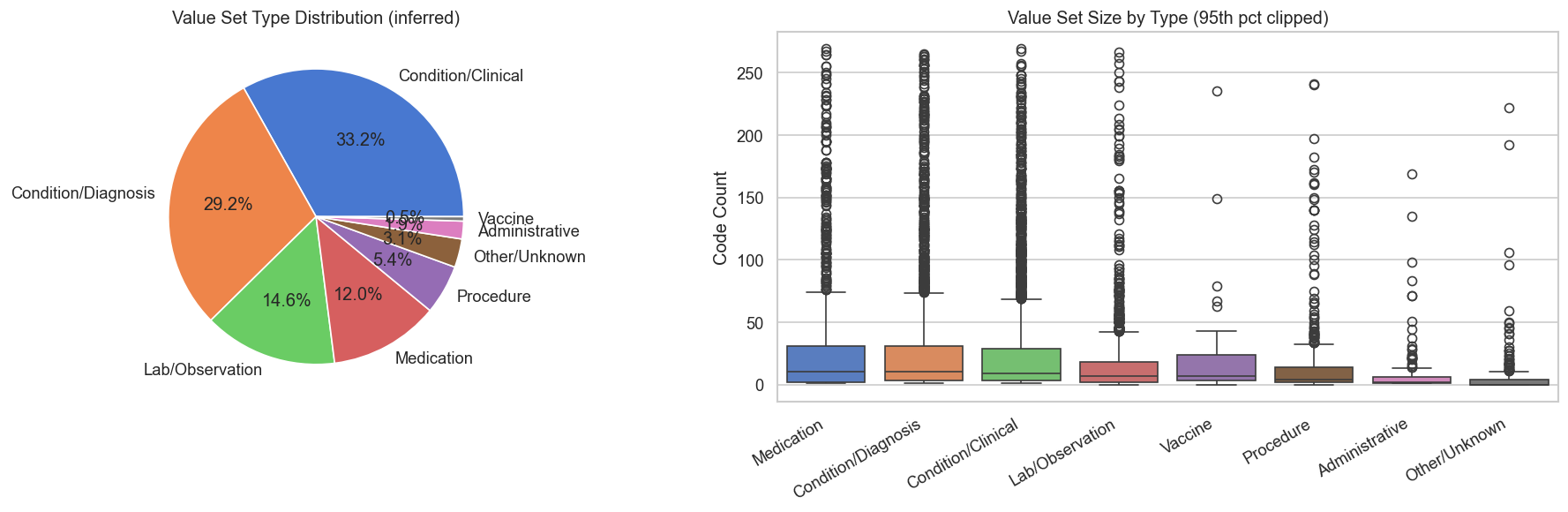}
    \caption{Left: inferred value set type distribution. Condition types
    dominate the corpus. Right: value set size (code count) by type, clipped
    at the 95th percentile. Medication value sets exhibit substantially higher
    size variance than other types.}
    \label{fig:type}
\end{figure}

\subsection{Description Coverage}

Figure~\ref{fig:desc} shows the fraction of value sets carrying a
human-authored textual description, and the distribution of description
lengths among those that do. Only 19.6\% of value sets include a description
field; the remaining 80.4\% provide a title alone. Among value sets with
descriptions, lengths are right-skewed, peaking between 5 and 15 words,
with most consisting of brief definitional phrases rather than detailed
clinical narrative. This near-absence of descriptions has a direct
methodological consequence: augmenting title embeddings with description
text would produce an inconsistent representation across the corpus, since
no description signal exists for four out of five value sets at inference
time. We therefore embed titles only.

\begin{figure}[h]
    \centering
    \includegraphics[width=\textwidth]{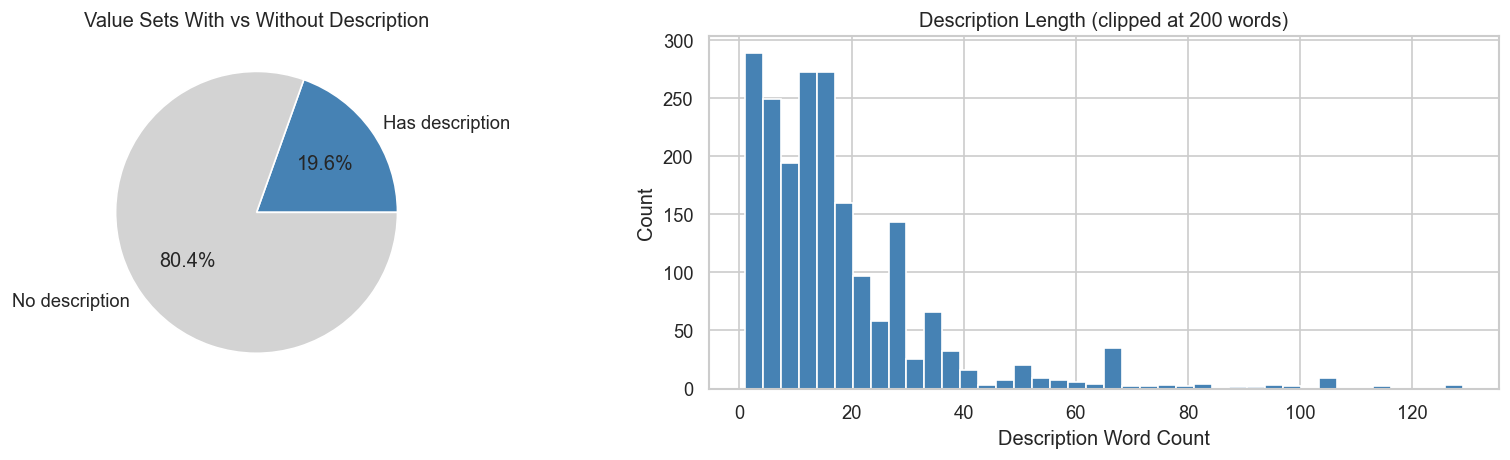}
    \caption{Left: fraction of value sets with a human-authored textual
    description. Only 19.6\% carry one. Right: distribution of description
    word count among value sets that have a description; most are brief
    (under 20 words).}
    \label{fig:desc}
\end{figure}

\subsection{Publisher Distribution}

Figure~\ref{fig:publisher} shows the top 20 publisher organizations by value
set count. The distribution is highly concentrated: CSTE Steward alone
contributes approximately over 1500 value sets, followed by NCQA PHEMUR
, Clinical Architecture , and HL7 Patient Care WG
, with a long tail of hundreds of smaller organizations across
more than 800 distinct publishers in total.

\begin{figure}[h]
    \centering
    \includegraphics[width=0.85\textwidth]{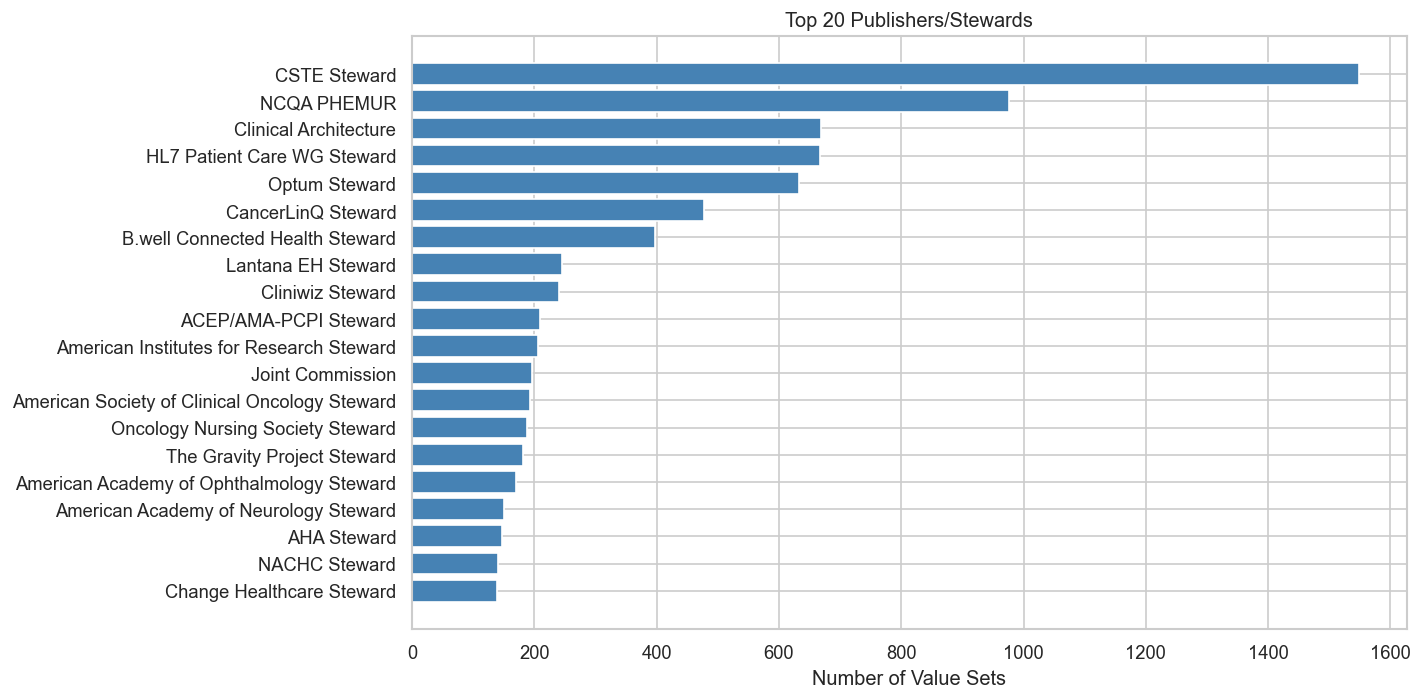}
    \caption{Top 20 VSAC publisher organizations by number of value sets.
    The distribution is highly concentrated; the top 5 publishers account
    for a disproportionate share of the corpus.}
    \label{fig:publisher}
\end{figure}

\section{GPT-4o Prompt}
\label{sec:appendix:prompt}

Each value set is evaluated with a single zero-shot call.
The system prompt and user message are shown below.

\paragraph{System prompt.}
\begin{quote}
\ttfamily\small
You are a clinical informaticist expert in medical terminologies including
SNOMED CT, ICD-10-CM, RxNorm, LOINC, ICD-10-PCS, CPT, HCPCS, and NCI
Thesaurus.

Your task is to generate the expansion of a clinical value set given its
name and the allowed code systems. A value set expansion is a list of
standardised clinical codes that define a specific clinical concept.

Return ONLY a JSON array. Each element must be an object with exactly
these fields: ``code'' (the exact code identifier), ``system'' (one of
the allowed systems listed for this value set), and ``display'' (the
human-readable name for the code).

Only use codes from the allowed systems listed for this value set. Use
real, valid codes --- do not invent codes. Be comprehensive but precise.
Return ONLY the JSON array, no explanation or markdown fences.
\end{quote}

\paragraph{User message.}
\begin{quote}
\ttfamily\small
Value set name: \textit{\{title\}}\\
Allowed code systems:\\
\phantom{xx}- \textit{\{system\textsubscript{1}\}}\\
\phantom{xx}- \textit{\{system\textsubscript{2}\}}\\
$\vdots$

Return the value set expansion as a JSON array of
\{"code", "system", "display"\} objects.
\end{quote}

\noindent
Temperature was set to~0 for deterministic outputs and \texttt{max\_tokens}
to~4{,}096. Allowed code systems were inferred from the candidate pool
rather than specified a priori, ensuring the prompt reflects only
information available at inference time. Responses were parsed as JSON
arrays; bare single-object responses were wrapped into a list, and system
URI strings were normalized to canonical short forms prior to matching
(e.g.\ \texttt{http://www.nlm.nih.gov/research/umls/rxnorm}
$\to$ \texttt{RxNorm}).

\section{Proof of the Sample Complexity Gap}
\label{sec:appendix_theory}

We provide the formal setup and proof for Corollary~\ref{cor:sample_complexity}.
The analysis compares direct open-world prediction over the full universe
$\mathcal{U}$ against retrieval-restricted prediction over a candidate pool
$\mathcal{C}(q) \subseteq \mathcal{U}$.

\begin{assumption}[Sparse labels]
\label{ass:sparse}
For every query $q$, $|Y(q)| \le s$ with $s \ll N$.
\end{assumption}

\begin{assumption}[Margin]
\label{ass:margin}
There exists $\gamma > 0$ such that for every query $q$,
$\theta_q(c) \ge \tau + \gamma$ for all $c \in Y(q)$, and
$\theta_q(c) \le \tau - \gamma$ for all $c \notin Y(q)$.
\end{assumption}

\begin{assumption}[Retrieval coverage and candidate size]
\label{ass:retrieval}
For each query $q$, the retriever outputs $\mathcal{C}(q)$ with
$|\mathcal{C}(q)| \le K \ll N$ and
$\Pr(Y(q) \subseteq \mathcal{C}(q)) \ge 1 - \varepsilon_{\mathrm{ret}}$.
\end{assumption}

\begin{assumption}[Uniform score estimation]
\label{ass:uniform}
There exists $\sigma > 0$ such that for every query $q$, every finite
$A \subseteq \mathcal{U}$, and every $t > 0$,
\[
    \Pr\!\left(
        \sup_{c \in A} |\hat\theta_n(q,c) - \theta_q(c)| > t
        \;\Big|\; q
    \right)
    \le
    2|A| \exp\!\left( -\frac{nt^2}{2\sigma^2} \right).
\]
This is a standard finite-class union-bound concentration: if estimation
errors are sub-Gaussian with variance $\sigma^2/n$ for each code $c$
individually, a union bound over $A$ yields the above, with the cost of
uniformity growing only as $\log|A|$.
\end{assumption}

For any $A \subseteq \mathcal{U}$, define the thresholded predictor
$\hat{Y}_A(q) := \{c \in A : \hat\theta_n(q,c) \ge \tau\}$.
The direct and retrieval-restricted predictors are then
$\hat{Y}_{\mathrm{dir}}(q) := \hat{Y}_{\mathcal{U}}(q)$ and
$\hat{Y}_{\mathrm{RASC}}(q) := \hat{Y}_{\mathcal{C}(q)}(q)$ respectively.

\begin{theorem}[Exact recovery probability]
\label{thm:main}
Under Assumptions~\ref{ass:sparse}--\ref{ass:uniform}, for every query $q$,
\[
    \Pr\!\bigl(\hat{Y}_{\mathrm{dir}}(q) \neq Y(q) \mid q \bigr)
    \;\le\;
    2N \exp\!\left( -\frac{n\gamma^2}{2\sigma^2} \right),
\]
and
\[
    \Pr\!\bigl(\hat{Y}_{\mathrm{RASC}}(q) \neq Y(q) \mid q \bigr)
    \;\le\;
    \varepsilon_{\mathrm{ret}}(q)
    \;+\;
    2K \exp\!\left( -\frac{n\gamma^2}{2\sigma^2} \right),
\]
where $\varepsilon_{\mathrm{ret}}(q) := \Pr(Y(q) \nsubseteq \mathcal{C}(q) \mid q)$.
Averaging over $q$,
\[
    \Pr\!\bigl(\hat{Y}_{\mathrm{RASC}}(q) \neq Y(q)\bigr)
    \;\le\;
    \varepsilon_{\mathrm{ret}}
    \;+\;
    2K \exp\!\left( -\frac{n\gamma^2}{2\sigma^2} \right).
\]
\end{theorem}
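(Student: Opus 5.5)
The proof reduces exact recovery to a deterministic event, namely uniform score accuracy within the margin, and then applies Assumption~\ref{ass:uniform} with $t=\gamma$. The first step is a deterministic lemma. Fix $q$ and a finite $A\subseteq\calU$. Suppose $\sup_{c\in A}|\hat\theta_n(q,c)-\theta_q(c)|<\gamma$. Then for every $c\in A\cap Y(q)$, Assumption~\ref{ass:margin} gives $\hat\theta_n(q,c)>\theta_q(c)-\gamma\ge\tau$, so $c\in\hat Y_A(q)$. For every $c\in A\setminus Y(q)$ it gives $\hat\theta_n(q,c)<\theta_q(c)+\gamma\le\tau$, so $c\notin\hat Y_A(q)$. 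Hence on this event $\hat Y_A(q)=A\cap Y(q)$. If in addition $Y(q)\subseteq A$, this equals $Y(q)$.

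For the direct predictor we take $A=\calU$. This set is finite with $|\calU|=N$ and trivially contains $Y(q)$. The failure event is therefore contained in $\{\sup_{c\in\calU}|\hat\theta_n-\theta_q|\ge\gamma\}$, and Assumption~\ref{ass:uniform} bounds its conditional probability by $2N\exp(-n\gamma^2/2\sigma^2)$. There is a minor subtlety: the assumption is stated with a strict inequality ``$>t$'', while we need ``$\ge\gamma$''. I would handle this by applying the assumption at $t=\gamma'<\gamma$ and letting $\gamma'\uparrow\gamma$; the right-hand side is continuous in $t$, so the bound survives. Alternatively one can use the margin non-strictly with ties broken toward the threshold.

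For RASC, fix $q$ and split on the event $E=\{Y(q)\subseteq\calC(q)\}$:
\[
\Pr(\hat Y_{\mathrm{RASC}}\neq Y\mid q)\le \Pr(E^c\mid q)+\Pr(\{\hat Y_{\calC(q)}\neq Y\}\cap E\mid q).
\]
The first term is $\varepsilon_{\mathrm{ret}}(q)$ by definition. On $E$, the lemma with $A=\calC(q)$ shows that failure requires $\sup_{c\in\calC(q)}|\hat\theta_n-\theta_q|\ge\gamma$.

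The main obstacle is that $\calC(q)$ is random, while Assumption~\ref{ass:uniform} is stated for a fixed finite $A$ given $q$. I would condition on the realized pool as well: interpret Assumption~\ref{ass:uniform} as holding conditionally on $(q,\calC(q))$, i.e.\ the retriever's output is independent of the estimation noise, as is natural since retrieval uses the corpus and not $\hat\theta_n$. Then for each realized pool $A$ with $|A|\le K$ the conditional failure probability is at most $2|A|e^{-n\gamma^2/2\sigma^2}\le 2Ke^{-n\gamma^2/2\sigma^2}$. Integrating over the pool distribution gives the second term. If one prefers not to add an independence reading, one can instead take the union of all possible pools, but that forfeits the $K$ dependence, so I would make the conditional interpretation explicit.

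Finally, taking expectation over $q$ of the conditional bound gives $\mathbb{E}_q[\varepsilon_{\mathrm{ret}}(q)]\le\varepsilon_{\mathrm{ret}}$ by Assumption~\ref{ass:retrieval}, plus the constant term, which is the averaged statement. Corollary~\ref{cor:sample_complexity} then follows by setting $2N e^{-n\gamma^2/2\sigma^2}\le\delta$ and $2Ke^{-n\gamma^2/2\sigma^2}\le\delta-\varepsilon_{\mathrm{ret}}$ and solving for $n$. Assumption~\ref{ass:sparse} is not needed for the bound itself.
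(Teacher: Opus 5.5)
Your proposal is correct and follows essentially the same route as the paper. The shared steps are the deterministic margin lemma giving $\hat Y_A(q)=Y(q)\cap A$ whenever $\sup_{c\in A}|\hat\theta_n(q,c)-\theta_q(c)|<\gamma$, the choice $A=\calU$ for the direct predictor, the split on $E=\{Y(q)\subseteq\calC(q)\}$ with $|\calC(q)|\le K$ for RASC, and averaging over $q$. Your limiting argument $t=\gamma'\uparrow\gamma$ (the paper applies the strict-inequality Assumption~\ref{ass:uniform} directly at $t=\gamma$) and your explicit conditioning on the realized pool $\calC(q)$ (the paper invokes the assumption on $E_q$ as though the pool were fixed) make rigorous two steps the paper passes over silently.
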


\begin{proof}
We first analyze $\hat{Y}_A(q)$ for a generic finite set $A \subseteq \mathcal{U}$.
If $\sup_{c \in A} |\hat\theta_n(q,c) - \theta_q(c)| < \gamma$, then for
every $c \in Y(q) \cap A$ we have $\hat\theta_n(q,c) \ge \theta_q(c) - \gamma \ge \tau$,
and for every $c \in A \setminus Y(q)$ we have
$\hat\theta_n(q,c) \le \theta_q(c) + \gamma \le \tau$.
Hence $\hat{Y}_A(q) = Y(q) \cap A$ on this event, so
\[
    \Pr\!\bigl(\hat{Y}_A(q) \neq Y(q) \cap A \mid q\bigr)
    \;\le\;
    \Pr\!\left(
        \sup_{c \in A} |\hat\theta_n(q,c) - \theta_q(c)| \ge \gamma
        \mid q
    \right)
    \;\le\;
    2|A| \exp\!\left( -\frac{n\gamma^2}{2\sigma^2} \right),
\]
where the last inequality applies Assumption~\ref{ass:uniform} with $t = \gamma$.

Taking $A = \mathcal{U}$ and using $Y(q) \cap \mathcal{U} = Y(q)$ gives the
bound for $\hat{Y}_{\mathrm{dir}}(q)$.

For $\hat{Y}_{\mathrm{RASC}}(q)$, exact recovery additionally requires the
event $E_q := \{Y(q) \subseteq \mathcal{C}(q)\}$.
By the law of total probability,
\[
    \Pr\!\bigl(\hat{Y}_{\mathrm{RASC}}(q) \neq Y(q) \mid q\bigr)
    \;\le\;
    \Pr(E_q^c \mid q)
    \;+\;
    \Pr\!\bigl(\hat{Y}_{\mathcal{C}(q)}(q) \neq Y(q) \cap \mathcal{C}(q)
        \mid E_q, q\bigr).
\]
By Assumption~\ref{ass:retrieval}, $\Pr(E_q^c \mid q) = \varepsilon_{\mathrm{ret}}(q)$.
On $E_q$ we have $Y(q) \cap \mathcal{C}(q) = Y(q)$, and since
$|\mathcal{C}(q)| \le K$, the generic bound above gives
\[
    \Pr\!\bigl(\hat{Y}_{\mathcal{C}(q)}(q) \neq Y(q) \cap \mathcal{C}(q)
        \mid E_q, q\bigr)
    \;\le\;
    2K \exp\!\left( -\frac{n\gamma^2}{2\sigma^2} \right).
\]
Combining and averaging over $q$ completes the proof.
\end{proof}

Theorem~\ref{thm:main} separates the two error sources: an approximation term
$\varepsilon_{\mathrm{ret}}$ from retrieval misses, and an estimation term
controlled by $K$ rather than $N$.
Corollary~\ref{cor:sample_complexity} follows immediately by solving each
bound for $n$.

\begin{proof}[Proof of Corollary~\ref{cor:sample_complexity}]
Setting $2N\exp(-n\gamma^2/2\sigma^2) \le \delta$ and solving for $n$ gives
the direct prediction condition.
For RASC, on the event $E_q$ the estimation term satisfies
$2K\exp(-n\gamma^2/2\sigma^2) \le \delta - \varepsilon_{\mathrm{ret}}$;
solving for $n$ gives the stated condition.
\end{proof}

\end{document}